\documentclass[submission,copyright,creativecommons]{eptcs}
\usepackage{breakurl}             
\usepackage{underscore} 

\usepackage{amsmath, amssymb, amsthm}
\usepackage{xcolor, graphicx, subcaption, multirow}


\newtheorem{prop}{Proposition}

\title{The Causal Structure of Semantic Ambiguities}
\author{Daphne Wang
\institute{University College London}
\and
Mehrnoosh Sadrzadeh
\institute{University College London}
}

\begin{document}
\maketitle

\begin{abstract}
    Ambiguity is a natural language phenomenon occurring at different levels of syntax, semantics, and pragmatics. It is widely studied; in Psycholinguistics, for instance,  we have a variety of competing studies for the human disambiguation processes. These studies are empirical and based on eye-tracking measurements.  Here we take first steps towards formalizing these processes for semantic ambiguities where we identified the presence of two features: (1) joint plausibility degrees of different possible interpretations, (2) causal structures according to which certain words play a more substantial role in the processes. The novel sheaf-theoretic model of definite causality developed by Gogioso and Pinzani in QPL 2021 offers tools to model and reason about these features. We applied this theory to a dataset of ambiguous phrases extracted from Psycholinguistics literature and their human plausibility judgements collected by us using the Amazon Mechanical Turk engine. We measured the causal fractions  of different disambiguation orders within the phrases and discovered  two prominent orders: from subject to verb in the subject-verb and from object to verb in the verb object phrases.  We also found evidence for delay in the disambiguation of polysemous vs homonymous verbs, again compatible with Psycholinguistic findings. 
\end{abstract}

\section{Introduction}
\label{sec:intro}

Discovering, studying, and formalising the ambiguities of natural language  is an area of the field of Computational Linguistics.  Natural language ambiguities occur  at different levels of syntax, semantics, and pragmatics. Syntactic ambiguities are due to single words with multiple grammatical roles, e.g. both noun and verb, as in `book' and `cook'. They also occur at the phrase level, e.g. in the phrase `old men and women'  the adjective `old' can be modifying both the word `men' or the conjunction `men and women'. Other  examples are  `Show me the meals on the flight from SF', and `We saw the Eiffel Tower flying to Paris'.   Ambiguities  surpass sentential boundaries, as in the anaphorically ambiguous discourse `I put the CD in the computer. It broke.'.  Another  prominent type of ambiguity is due to different interpretations of words and phrases. Words/phrases that are `homonymous'  have more than one unrelated  interpretation, due to historical incidence or other reasons, as in `plant', `pitcher', or `coach'. Polysemous words/phrases have more than one related interpretations,  as in `newspaper', which can refer to the collection of papers stapled together (literal) or the content conveyed by these (figurative). 

The process of semantic disambiguation and the role of the context has been studied in the field of Psycholinguistics via devices such as eye-tracking. Here, the delay in reading and the trajectory of the gaze shows that  context plays different roles when disambiguating different  types of semantically ambiguous words. These results show that ambiguous verbs get disambiguated late, e.g. after reading the whole sentence, whereas certain ambiguous nouns get disambiguated almost immediately and without much reliance on context. They also show that polysemous verbs are disambiguated  later than homonymous verbs, and the same applies for nouns, i.e. that polysemous nouns are disambiguated later than homonymous nouns.   Semantic ambiguities are plentiful and universal.   They also easily lift from word to phrase level, where the number and complexity of potential interpretations  increase exponentially. Assuming only two interpretations per word (which is the minimal criteria for ambiguity; resources such as \texttt{WordNet} list 78695 senses for  a total of 128321 words of English), a 2-word phrase, the simplest  ambiguous combination of two ambiguous words, can have up to four different interpretations, a 3-word phrase up to 8, an $n$-word one up to $2^n$.  Instances of ambiguous 2-word phrases are `(the) cabinet reflects', `(the) pitcher threw',  and `(the) plant bored', where  `pitcher threw'  has 3 interpretations: a jug throwing a shadow, a baseball player throwing a shadow, or a baseball player throwing a ball. `A plant bore' has 4 interpretations: a factory that makes holes (e.g. in metal bars), a factory that made its workers weary, a house plant that was uninteresting,  a house plant that pierced its pot.

 Whether word or phrase, the interpretations of ambiguous natural language expressions depends on their contexts and this can be formalised in different ways. We use the fact that  interpretations are context-dependent  and that this gives rise to context-dependent probability distributions,  corresponding to the likelihood that a certain meaning of a word is selected in a context. By  context we refer to any  linguistic or non-linguistic information, e.g. knowledge-based, resources and background information. This gives rise to two questions that we aim at investigating.  (1) Given these resources, can the single distributions of the interpretations of each word within a phrase be used to compute a distribution for the meaning combination of the full phrase? For example, if we see the word ``pitcher'' in the corpus mostly as a baseball player and ``threw'' as throwing a ball, when we next come across the phrase ``pitcher threw'', can we be sure that it means a baseball player threw a ball and not a jug threw a shadow? (2) Since there is a temporal order in the disambiguation process, is there a causal order in the process, and if so how  can it be quantified used to replicate the Psycholinguistics findings?  The first question can be formalised in terms of quantum-like contextuality and indeed previous research has been done on whether cognitive processes are contextual in this way, (see for example the work of Bruza et. al. focusing on concept combinations\cite{BRUZA201526} and the ``mental lexicon''\cite{Bruza2009}). 
 

We answered the first question in previous work   \cite{Wangetal-Conf,Wangetal-Abs,Wangetal-Jour}  using the sheaf theoretic model of contextuality of  \cite{AbramskyBrad} and its generalisation to signalling scenarios in the Contextuality by Default (CbD) setting of  \cite{Dzhafarovetal}. Using these tools, we  formalised  the first question as: ``Is there a global joint probability distribution that describes the probabilistic distributions of phrases where we can maximise the probability distributions of each word within the phrase?".  We hypothesised that, similar to the case in Quantum mechanics,  the answer to this question is no, and found a few examples that witnessed it. This led us to the conclusion that the pre-existing value of the interpretation of a word in a phrase is not independent of the interpretations of other words in the context (including the phrase itself).

In this paper, we formalise and answer the second question using recent advances in  causal sheaf theory and in particular the development  of \cite{sheafcausality}.  Our methodology is as follows. We first devise a dataset  consisting of equal numbers of polysemous and homonymous nouns and verbs. The nouns and verbs are trimmed down from a larger such set, with the demarcation rule that both of their subject-verb and verb-object combinations in a phrase would make sense. We put these phrases on Amazon Mechanical Turk and collect human judgements for degrees of plausibility of each phrase.  We compute probability distributions from our Amazon Turk human judgements and verify which proportion of the judgements are compatible with one of our four main causal orders: Object/Subject $\to$ Verb, Verb $\to$ Object/Subject. We then work within Subject-Verb and Verb-Object phrases and for each phrase type study which of the four causal order was higher than the other:   polysemous verbs/nouns or homonymous verbs/nouns.  Our findings confirm the Psycholinguistic research, that (1) the prominent causal order of phrases is from the noun to the verb, i.e. from Subject $\to$ Verb in Subject-Verb phrases and from the Object $\to$ Verb in the Verb-Object phrase. In other words, the verb is the last part of speech to be disambiguated in a sentence,  (2) polysemous verbs are disambiguated  later than homonymous verbs, (3) polysemous nouns are disambiguated later than homonymous nouns.

\section{The causal framework and the causal fraction}
The aim of this section is to introduce the causal framework  and the causal fraction therein. In this entry paragraph we briefly review the relevance of these to linguistic scenarios. 

 The study of quantum contextuality  relies on the \emph{no-signalling} property of a given system; i.e. that choices of observables do not influence the outcomes of other observables measured at the same time. However, linguistic scenarios do not necessarily  satisfy this  property, and neither does it fit with our intuition: no-signalling in our linguistic scenarios would mean that the choice of a word does not influence the interpretation of other words in a phrase. There are mathematical models  that generalise the notion of contextuality from no-signalling to all systems. An example is the Contextual-by-Default  (CbD) framework \cite{Dzhafarovetal}. In this framework, however, the source and nature of  signalling propoerty is completely disregarded, and the interpretation of what contextuality might say about the system is not  clear anymore. In previous work \cite{Wangetal-Abs,Wangetal-Conf,Wangetal-Jour},  we nonetheless used the CbD framework and analysed ambiguous examples from natural language. Here, instead,  we  work with the extension of the sheaf-theoretic model of contextuality to causality. This extension was developed in \cite{sheafcausality}. The use of the causal framework not only enables us to  allow for some signalling, but also determines whether the signalling observed in our systems has a direction; for example, does the choice of verb has an influence on the interpretation of its object, or similarly on the interpretation of its subject.  

\subsection{The causal framework}
\paragraph{}
A causal scenario consists  of a list of events and a set of causal relations associated to them. An event $X$ is usually considered to be a generalised process with a set of possible inputs and outputs; these are respectively denoted as $I_X$ and $O_X$. If we are considering a definite causal scenario, then the relations form a partial order where, e.g. $A \to B$ if $A$ causally precedes $B$, as we do not allow causal loops (antisymmetry), and causality is clearly transitive and reflexive.  If we are considering  indefinite causal scenarios, then more exotic processes not compatible with the standard circuit model of quantum computation will be needed to describe the systems.

Formally, a \emph{causal scenario} is defined to be a triple $\Sigma = \left(\Omega, \underline{I}, \underline{O}\right)$ where $\Omega$ is a poset representing the causal relations between events, $\underline{I} = \left(I_\omega\right)_{\omega\in \Omega}$ includes all possible inputs for all events, and  $\underline{O} = \left(O_\omega\right)_{\omega\in \Omega}$ includes all the possible outputs for all events.

\paragraph{}Given the poset $\Omega$, its associated set of \emph{lowersets}  is denoted by $\Lambda\left(\Omega\right)$. These are downwards closed subsets of $\Omega$. In terms of causal events, each element of $\Lambda\left(\Omega\right)$ corresponds to a set of events that admits a description independent to other events.

\paragraph{}The \emph{locale of events} is  the set:
\begin{equation}
    \mathcal{L}_\Sigma = \left\{\left(\lambda, \left(U_\omega\right)_{\omega\in\lambda}\right)~\middle|~\lambda\in \Lambda\left(\Omega\right), U_\omega\subseteq I_\omega, U_\omega\neq \emptyset\right\}
\end{equation}
with which there is associated a partial order $U = \left(\lambda_U, \underline{U}\right) \leq V  = \left(\lambda_V, \underline{V}\right)$ iff $\lambda_U\subseteq \lambda_V$ and $U_\omega \subseteq V_\omega$ for all $\omega\in \lambda_U$. The meets and joins of this locale are defined as follows:
\begin{align}
    U\cup V =& \left(\lambda_U\cup \lambda_V, \left(U_\omega \cup V_\omega\right)\right)\\
    U\cap V =& \left(\left\{\omega \in \lambda_U\cap\lambda_V ~\middle|~ U_\omega\cap V_\omega \neq \emptyset\right\}, \left(U_\omega\cap V_\omega\right)\right)
\end{align}

\paragraph{}Over a (causal) poset $\Omega$, we introduce \emph{causal functions}; these are the  functions from inputs to outputs that respect the causal order, i.e. the inputs of succeeding events do not influence the outputs of preceding events. Formally,  this means that if we order the events $\omega_1 \to \omega_2 \ldots \to \omega_n$,  the function $f: I_1\times I_2 \times \ldots \times I_n \to O_1\times O_2 \times \ldots O_n$ satisfies the following for all $k$'s:
\begin{equation}
    \left.f\left(a_1, \ldots, a_k, a_{k+1}, \ldots, a_n\right)\right|_{\omega_k\downarrow} = \left.f\left(a_1, \ldots, a_k, a'_{k+1}, \ldots, a'_n\right)\right|_{\omega_k\downarrow}
\end{equation}

\paragraph{}We can now define the \emph{event sheaf} as follows:
\begin{align}
    \mathcal{E}_\Sigma: \mathcal{L}^{op} \quad\to&\quad \mathbf{Set}\nonumber\\
    \left(\lambda_U, \underline{U}\right) \quad\mapsto&\quad \left\{f:\underline{U} \to \prod_{\omega\in \lambda_U} O_\omega ~\middle|~ f \text{ causal}\right\}
\end{align}
and the restriction map is given by:
\begin{equation}
    \left(U\leq V\right) \quad \mapsto \quad \left(f\mapsto f|_U\right)
\end{equation}
The above means that the event sheaf associates to a lowerset and a set of inputs on the lowerset,  the set of all functions which respect the causal order of the scenario.  This sheaf encodes the required conditions for definite causality.

\paragraph{}An \emph{empirical model} is  a specific distribution on causal functions and is defined over all strings of inputs. Formally, it is the following element:
\begin{equation}
    e \in \prod_{\underline{i}\in \mathcal{I}_\Sigma} \mathcal{D}_R \mathcal{E}_\Sigma \left(\Omega, \underline{i}\right)
\end{equation}
where $\underline{i}$ is -- by abus de langage --  the string of singletons $\left(\left\{i_\omega\right\}\right)_{i_\omega\in \underline{i}}$, and $\mathcal{I}_\Sigma = \prod_{I_\omega\in \Omega} I_\omega$ is the set of all strings of inputs (for all events). Also, $\mathcal{D}_R$ is the $R$-distribution monad and $\mathcal{E}_\Sigma$ is the event sheaf  defined  above. This needs to be a compatible family of distributions (by definition), i.e.  the marginals on lowersets should be well-defined. For the rest of this paper, we will only make use of probabilistic distributions (i.e. $R=\mathbb{R}_+$).

\subsubsection{Example\label{subsubsec:BelltypeEx}} Let's consider an empirical model with only two events $\Omega = \{A, B\}$ with the single causal relation $A\to B$, such that $I_A = I_B = O_A = O_B=\{0, 1\}$, and the probability distribution on $\Omega\in \Lambda(\Omega)$  given as follows:
\begin{equation}\label{eq:finalEMcausal}
    \scalebox{.8}{
    \begin{tabular}{cc|c|c|c|c|}
         \multicolumn{2}{c|}{\multirow{2}{*}{$(A,B)$}} &\multicolumn{4}{c|}{Output}\\\cline{3-6}
         & & $(0,0)$ & $(0,1)$ &$(1,0)$ & $(1,1)$ \\\hline
         \multicolumn{1}{c|}{\multirow{4}{*}{\rotatebox{90}{Input}}} & $(0,0)$ & $0$ & $6/13$ & $0$ & $7/13$\\\cline{2-6}
         \multicolumn{1}{c|}{} & $(0,1)$ & $24/65$ & $6/65$ & $7/13$ & $0$\\\cline{2-6}
         \multicolumn{1}{c|}{} & $(1,0)$ & $23/65$ & $0$ & $14/65$ & $28/65$\\\cline{2-6}
         \multicolumn{1}{c|}{} & $(1,1)$ & $23/260$ & $69/260$ & $42/65$ & $0$\\\hline
    \end{tabular}}
\end{equation}
These types of models are called  (2,2,2) Bell-type scenarios (2 parties, 2 possible inputs each, 2 possible outcomes each) and are the simplest non-trivial empirical models with definite causality (up to relabelling).   (2,2,2) Bell-type scenarios  are the models explored further below, in the main body of this work. Note that this is indeed a compatible family on the given causal scenario as the restriction of the lowerset $\{A\}\in \Lambda(\Omega)$ is well defined, i.e. we have:
\begin{equation}
    \left.e_{(\Omega, (I_A, \{0\}))}\right|_{(\{A\}, I_A)} = \left.e_{(\Omega, (I_A, \{1\}))}\right|_{(\{A\}, I_A)} = \scalebox{.8}{\begin{tabular}{cc|c|c|}
        \multicolumn{2}{c|}{\multirow{2}{*}{$A$}} &\multicolumn{2}{c|}{Output}\\\cline{3-4}
        & & $0$ & $1$ \\\hline
        \multicolumn{1}{c|}{\multirow{2}{*}{\rotatebox{90}{Input~}}} & $0$ & $6/13$ & $7/13$ \\\cline{2-4}
        \multicolumn{1}{c|}{} & $1$ & $23/65$ & $42/65$ \\\hline
   \end{tabular}}
\end{equation}

\subsection{The causal fraction}

So far, we  assumed that the causal order of our system is known, but this is not generally the case. We now describe  how given a known final distribution such as the one depicted in \eqref{eq:finalEMcausal}, one can decide what  the underlying causal order is.  In particular, one define the \emph{causal fraction} of a (final) family of distributions which corresponds to the proportion of the model which is compatible with a given causal order.  In the phenomena we are modelling, i.e. semantic ambiguity in natural language,   the causal order of the system is   in general unknown and so we have to start from the set of probability distributions over all events and only decide later which of these is the most likely causal order.

\paragraph{}If a model $e$ is not fully compatible with a given causal order $\Omega$, we can calculate how much of the model can be explained by the causal order $\Omega$. By definition, this is defined as the maximal $\gamma\in [0,1]$ s.t.:
\begin{equation}
\label{eq:causal}
    \gamma \cdot e^\Omega \preceq e 
\end{equation}
where the partial order $\preceq$ is defined component-wise on empirical models defined on the same causal scenarios. Equivalently, this is to say that the causal fraction corresponds to the maximal proportion of the model which can be explained by an empirical model compatible with a given causal scenario.  

In general, finding $\gamma$ in \ref{eq:causal} is a hard optimisation problem, as one needs to consider all empirical models $e^\Omega$ compatible with the causal order $\Omega$.  We here prove that there are simpler ways to approximate, using \ref{eq:causual2}, or in some cases even calculate the causal fraction (see Proposition \ref{prop:2}).

\begin{prop}
    For a family of probability distributions where the causal order is not known, an upper bound of the causal fraction can  be calculated as follows\footnote{Note: the order of the restrictions is from left to right.}:
\begin{equation}
\label{eq:causual2}
    \gamma \leq \min_{U,V} 1 - \left|\left.\left.e_{\underline{i}}\right|_U\right|_{U\cap V}\left(\underline{o}\right) - \left.\left.e_{\underline{i}}\right|_V\right|_{U\cap V}\left(\underline{o}\right)\right|
\end{equation}
where $\left.\left.e_{\underline{i}}\right|_U\right|_{U\cap V}$ corresponds to the restriction of $e_{\underline{i}}$ to first $U$ and then from $U$ to $U\cap V$ (and similarly for $\left.\left.e_{\underline{i}}\right|_V\right|_{U\cap V}$).
\end{prop}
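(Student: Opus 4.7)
The plan is to exploit the definition of the causal fraction as the maximal $\gamma \in [0,1]$ such that $\gamma \cdot e^\Omega \preceq e$ holds componentwise for some empirical model $e^\Omega$ compatible with the causal order $\Omega$. The strategy is to show that any such $\gamma$ forces the two nested restrictions of $e$ down to $U \cap V$ to differ by at most $1 - \gamma$ in absolute value, using two ingredients: (i) the remainder $r := e - \gamma \cdot e^\Omega$ is a nonnegative family of sub-measures of total mass $1 - \gamma$ on each input string; (ii) the compatible model $e^\Omega$, being a global section of the event sheaf, satisfies the gluing identity $\left.\left.e^\Omega_{\underline{i}}\right|_U\right|_{U\cap V}=\left.\left.e^\Omega_{\underline{i}}\right|_V\right|_{U\cap V}$, since $U \cap V$ is itself a lowerset.

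First I would fix arbitrary lowersets $U, V \in \mathcal{L}_\Sigma$, an input string $\underline{i}$, and an output $\underline{o}$ over $U \cap V$. Setting $r_{\underline{i}} := e_{\underline{i}} - \gamma \cdot e^\Omega_{\underline{i}}$, the condition $\gamma \cdot e^\Omega \preceq e$ gives pointwise $r_{\underline{i}} \geq 0$, and counting total mass yields $\sum_{\underline{o}} r_{\underline{i}}(\underline{o}) = 1 - \gamma$. Since restriction of sections in the event sheaf is a linear operation (marginalising over outputs outside the relevant lowerset and projecting inputs accordingly), nested restriction commutes with the scalar subtraction:
\begin{equation*}
    \left.\left.r_{\underline{i}}\right|_U\right|_{U\cap V}(\underline{o}) = \left.\left.e_{\underline{i}}\right|_U\right|_{U\cap V}(\underline{o}) - \gamma \cdot \left.\left.e^\Omega_{\underline{i}}\right|_U\right|_{U\cap V}(\underline{o}),
\end{equation*}
with the analogous identity obtained by replacing $U$ by $V$ throughout.

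Next I would invoke the compatibility of $e^\Omega$, which by (ii) cancels the two $e^\Omega$ contributions when the two displayed identities are subtracted, yielding
\begin{equation*}
    \left.\left.e_{\underline{i}}\right|_U\right|_{U\cap V}(\underline{o}) - \left.\left.e_{\underline{i}}\right|_V\right|_{U\cap V}(\underline{o}) = \left.\left.r_{\underline{i}}\right|_U\right|_{U\cap V}(\underline{o}) - \left.\left.r_{\underline{i}}\right|_V\right|_{U\cap V}(\underline{o}).
\end{equation*}
Each term on the right is the value, at a single output, of a nonnegative sub-measure whose total mass is $1 - \gamma$, hence each lies in $[0, 1 - \gamma]$. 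The difference of two such nonnegative quantities is bounded in absolute value by $1 - \gamma$, which rearranges to $\gamma \leq 1 - \left|\left.\left.e_{\underline{i}}\right|_U\right|_{U\cap V}(\underline{o}) - \left.\left.e_{\underline{i}}\right|_V\right|_{U\cap V}(\underline{o})\right|$. Since this holds for every admissible choice, taking the minimum over $U, V$ (at the free values of $\underline{i}, \underline{o}$) produces the claimed upper bound.

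The main obstacle is essentially bookkeeping: being precise about how the restriction maps in $\mathcal{L}_\Sigma$ act on distribution-valued sections so that both commutation with scalar subtraction and preservation of total mass become routine, and verifying that $U \cap V$ computed in $\mathcal{L}_\Sigma$ is indeed the lowerset over which the gluing identity of the compatible $e^\Omega$ can be applied. Once these are granted, the argument is entirely elementary and, crucially, avoids any optimisation over the (exponentially large) class of compatible models $e^\Omega$, which is precisely what makes the bound computationally useful when we later estimate causal fractions from the Amazon Mechanical Turk data.
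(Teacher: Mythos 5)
Your proof is correct and takes essentially the same route as the paper's: both arguments hinge on the compatibility of $e^\Omega$ making its marginal on $U\cap V$ path-independent, combined with the nonnegativity and unit normalisation of $e$ and $e^\Omega$, which together force the two nested restrictions of $e$ to differ by at most $1-\gamma$. Your packaging via the remainder $r = e - \gamma\, e^\Omega$ of total mass $1-\gamma$ is just a tidier restatement of the paper's two componentwise inequalities (one bounding $\gamma\, e^\Omega$ at $\underline{o}$ by the smaller marginal, one bounding it at the complementary outputs by the larger), which the paper then adds to obtain $\gamma \leq 1 - m_+ + m_-$.
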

\begin{proof}
    For every causal empirical causal model $e^\Omega$ w.r.t. a causal scenario $\Sigma = \left(\Omega, \underline{I}, \underline{O}\right)$, if we have $\gamma \cdot e^\Omega \preceq e$, then both:
    \begin{equation}
        \gamma \cdot \left.e^\Omega_{\underline{i}}\right|_{U\cap V}(\underline{o})\leq \left.\left.e_{\underline{i}}\right|_U\right|_{U\cap V}(\underline{o})
    \end{equation}
    and
    \begin{equation}
        \gamma \cdot\left.e^\Omega_{\underline{i}}\right|_{U\cap V}(\underline{o})\leq \left.\left.e_{\underline{i}}\right|_V\right|_{U\cap V}(\underline{o})
    \end{equation}
    So:
    \begin{equation}\label{eq:p1e1}
        \gamma \cdot \left.e^\Omega_{\underline{i}}\right|_{U\cap V}(\underline{o})\leq \min_{X\in \{U,V\}}\left.\left.e_{\underline{i}}\right|_X\right|_{U\cap V}(\underline{o})
    \end{equation}
    Now, since $e_{\underline{i}}$ are probability distributions:
    \begin{equation}
        1 - \left.\left.e_{\underline{i}}\right|_X\right|_{U\cap V}(\underline{o}) = \sum_{\underline{o'}\neq \underline{o}} \left.\left.e_{\underline{i}}\right|_X\right|_{U\cap V}(\underline{o'})
    \end{equation}
    and similarly for $e^\Omega$. Therefore, using $\gamma e^\Omega\preceq e$ once again:
    \begin{equation}\label{eq:p1e2}
        \gamma \left(1 - \left.\left.e^\Omega_{\underline{i}}\right|_X\right|_{U\cap V}(\underline{o})\right) \leq \min_{X\in \{U,V\}} 1 - \left.\left.e_{\underline{i}}\right|_X\right|_{U\cap V}(\underline{o}) = 1 - \max_{X\in \{U,V\}}\left.\left.e_{\underline{i}}\right|_X\right|_{U\cap V}(\underline{o})
    \end{equation}
    Then, writing $m_- = \min_{X\in \{U,V\}}\left.\left.e_{\underline{i}}\right|_X\right|_{U\cap V}(\underline{o})$ and $m_+ =\max_{X\in \{U,V\}}\left.\left.e_{\underline{i}}\right|_X\right|_{U\cap V}(\underline{o})$ for simplicity, we use \eqref{eq:p1e1} and \eqref{eq:p1e2} to get:
    \begin{equation}
        \gamma \leq 1 - m_+ + m_-
    \end{equation}
    Now, using binary minima and maxima this reduces to:
    \begin{equation}
        \gamma \leq 1 - \left|\left.\left.e_{\underline{i}}\right|_U\right|_{U\cap V}\left(\underline{o}\right) - \left.\left.e_{\underline{i}}\right|_V\right|_{U\cap V}\left(\underline{o}\right)\right|
    \end{equation}
    And since this has to be the case for all $U, V \in \mathcal{L}$, the claimed inequality has to hold.
\end{proof}

\noindent
In certain cases, such as the models described in Section \ref{subsubsec:BelltypeEx}, the above inequality  becomes an equality as the upper bound is attained.  This is expressed and proven below.\footnote{We believe  that this equality in fact holds for a larger range of systems. Proving a more general version of the proposition is left to future work.}

\begin{prop} \label{prop:2}
For the causal order $A\to B$ in a (2,2,2) Bell-type scenario, the causal fraction is given by:
\begin{equation}\label{eq:causalF}
    \gamma = \min_{i_A\in \{0,1\}, \underline{o}\in \{0,1\}} 1 - \left|\left.\left.e_{(i_A, 0)}\right|_{A\to B}\right|_{A}\left(\underline{o}\right) - \left.\left.e_{(i_A, 1)}\right|_{A\to B}\right|_{A}\left(\underline{o}\right)\right|
\end{equation}
\end{prop}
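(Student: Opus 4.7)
The plan is to show that the upper bound from Proposition~1 is tight for (2,2,2) Bell-type scenarios by exhibiting an explicit causal empirical model that saturates it. The upper bound is obtained by instantiating Proposition~1 with the locale elements $U = (\{A,B\},(\{i_A\},\{0\}))$ and $V = (\{A,B\},(\{i_A\},\{1\}))$ for each fixed $i_A \in \{0,1\}$. Since $U_B \cap V_B = \emptyset$, the meet in $\mathcal{L}_\Sigma$ collapses to $U \cap V = (\{A\}, \{i_A\})$, so $e_{(i_A,0)}|_U|_{U\cap V}$ is just the $O_A$-marginal $m_{(i_A,0)}(o_A) := \sum_{o_B} e_{(i_A,0)}(o_A,o_B)$, and likewise for $V$; taking the minimum over $i_A$ and $o_A$ gives the RHS as an upper bound on $\gamma$.

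For the matching lower bound, set $s_{i_A} := |m_{(i_A,0)}(o_A) - m_{(i_A,1)}(o_A)|$, which is independent of $o_A$ since each row sums to $1$, and $\gamma^* := 1 - \max_{i_A} s_{i_A}$. Define $\mu_{i_A}(o_A) := \min_{i_B} m_{(i_A,i_B)}(o_A)$, so that $\sum_{o_A}\mu_{i_A}(o_A) = 1 - s_{i_A} \geq \gamma^*$. I construct $e^\Omega$ by a chain-rule decomposition: take the $A$-marginal to be $m^\Omega_{i_A}(o_A) := \mu_{i_A}(o_A)/(1-s_{i_A})$, which is independent of $i_B$ and hence causal, and inherit the conditional from $e$, setting $p^\Omega(o_B \mid o_A, i_A, i_B) := e_{(i_A,i_B)}(o_A,o_B)/m_{(i_A,i_B)}(o_A)$ with an arbitrary choice whenever the denominator vanishes; then $e^\Omega_{(i_A,i_B)}(o_A,o_B) := m^\Omega_{i_A}(o_A)\, p^\Omega(o_B \mid o_A, i_A, i_B)$ is a causal empirical model with the desired $A$-marginal.

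The dominance $\gamma^* e^\Omega \preceq e$ then reduces, after cancelling the common factor $p^\Omega$, to $\gamma^* m^\Omega_{i_A}(o_A) \leq m_{(i_A,i_B)}(o_A)$, which holds because $\gamma^* \leq 1 - s_{i_A}$ gives $\gamma^* m^\Omega_{i_A}(o_A) = \gamma^*\mu_{i_A}(o_A)/(1-s_{i_A}) \leq \mu_{i_A}(o_A) \leq m_{(i_A,i_B)}(o_A)$. The main obstacle is handling the boundary cases cleanly: when $m_{(i_A,i_B)}(o_A) = 0$ both sides of the dominance vanish automatically, and when $s_{i_A} = 1$ for some $i_A$ the RHS is $\gamma^* = 0$, which is trivially attained by any causal model; outside these degenerate cases the construction is always well-posed. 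Combining the upper bound with this explicit witness yields the claimed equality.
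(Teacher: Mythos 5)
Your proof is correct and follows essentially the same route as the paper's: the upper bound is the instantiation of Proposition~1 at $U=(\Omega,(\{i_A\},\{0\}))$ and $V=(\Omega,(\{i_A\},\{1\}))$, and the matching witness is built by fixing a causal $A$-marginal out of the minima $\min_{i_B} m_{(i_A,i_B)}(o_A)$ and reattaching the conditional $p(o_B\mid o_A,i_A,i_B)$ inherited from $e$. The only substantive difference is that you normalise the $A$-marginal row-wise by $1-s_{i_A}$ instead of dividing the minimising entry by the global $\gamma$ and complementing, which is a slightly cleaner choice since it guarantees the marginal is a genuine probability distribution for every $i_A$, not only the one attaining the maximum of $s_{i_A}$.
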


\begin{proof}
    Let's describe a construction of a causal empirical model $e^\Omega$ which  satisfies $\gamma\cdot e^\Omega\preceq e$, for any given (2,2,2) Bell-type  model $e$, where $\gamma$ is given as in \eqref{eq:causalF}.

    We start by constructing a probability distribution for the event $A$ as follows. For any $i_A\in I_A$, we select $o^*_A \in O_A$ s.t.:
    \begin{equation}
        \min_{i_B\in I_B}\left.\left.e_{\left(i_A,i_B\right)}\right|_{A\to B}\right|_{A}(o^*_A) = \min_{o_A\in O_A}\min_{i_B\in I_B}\left.\left.e_{\left(i_A,i_B\right)}\right|_{A\to B}\right|_{A}(o_A)
    \end{equation}
    and set:
    \begin{equation}
        \left.e^\Omega_{(i_A, i_B)}\right|_{A}\left(o^*_A\right) = \frac{\min_{i_B\in I_B} \left.\left.e_{(i_A, i_B)}\right|_{A\to B}\right|_{A}\left(o^*_A\right)}{\gamma}
    \end{equation}
    and $\left.e^\Omega_{(i_A, i_B)}\right|_{A}\left(\neg o^*_A\right) = 1 - \left.e^\Omega_{(i_A, i_B)}\right|_{A}\left(o^*_A\right)$. Then we have:
    \begin{equation}
        \gamma \cdot \left.\left.e^\Omega_{(i_A, i_B)}\right|_{A\to B}\right|_{A}\left(o_A\right) \leq \left.\left.e_{(i_A, i_B)}\right|_{A\to B}\right|_{A}\left(o_A\right)
    \end{equation}
    for all $\left(i_A, i_B\right)\in I_A\times I_B$, and for all possible outcome $o_A\in O_A$.
    
    One can then extend this distribution to the lowerset $A\to B = \Omega$ by setting, for example:
    \begin{equation}
        e^\Omega_{(i_A, i_B)} \left(o_A, o_B\right) = \frac{\left.e^\Omega_{(i_A, i_B)}\right|_{A\to B}\left(o_A, o_B\right)}{\left.\left.e_{(i_A, i_B)}\right|_{A\to B}\right|_{A}\left(o_A\right)}\left.\left.e^\Omega_{(i_A, i_B)}\right|_{A\to B}\right|_{A}\left(o_A\right)
    \end{equation} 

    It is routine to check that this construction leads to a valid empirical model $e^\Omega$, which does indeed satisfy $\gamma\cdot e^\Omega \preceq e$.
\end{proof}

\subsubsection{Example} Let's consider another (2,2,2) Bell-type  example, where this time the final family of distributions are:
\begin{equation}
    e=\scalebox{.8}{
    \begin{tabular}{cc|c|c|c|c|}
        \multicolumn{2}{c|}{\multirow{2}{*}{$(A,B)$}} &\multicolumn{4}{c|}{Output}\\\cline{3-6}
        & & $(0,0)$ & $(0,1)$ &$(1,0)$ & $(1,1)$ \\\hline
        \multicolumn{1}{c|}{\multirow{4}{*}{\rotatebox{90}{Input}}} & $(0,0)$ & $0$ & $1/7$ & $0$ & $6/7$\\\cline{2-6}
        \multicolumn{1}{c|}{} & $(0,1)$ & $2/3$ & $1/6$ & $1/6$ & $0$\\\cline{2-6}
        \multicolumn{1}{c|}{} & $(1,0)$ & $1/4$ & $0$ & $1/4$ & $1/2$\\\cline{2-6}
        \multicolumn{1}{c|}{} & $(1,1)$ & $1/5$ & $3/5$ & $1/5$ & $0$\\\hline
   \end{tabular}}
\end{equation}

The marginal distributions for the two choices of inputs for $B$ are given by:
\begin{equation}
    \scalebox{.8}{
    \begin{tabular}{cc|c|c|}
        \multicolumn{2}{c|}{\multirow{2}{*}{$A$}} &\multicolumn{2}{c|}{Output}\\\cline{3-4}
        & & $0$ & $1$ \\\hline
        \multicolumn{1}{c|}{\multirow{2}{*}{\rotatebox{90}{Input~}}} & $0$ & $1/7$ & $6/7$ \\\cline{2-4}
        \multicolumn{1}{c|}{} & $1$ & $1/4$ & $3/4$ \\\hline
   \end{tabular}}
\end{equation}
if $i_B = 0$ and:
\begin{equation}
    \scalebox{.8}{
    \begin{tabular}{cc|c|c|}
        \multicolumn{2}{c|}{\multirow{2}{*}{$A$}} &\multicolumn{2}{c|}{Output}\\\cline{3-4}
        & & $0$ & $1$ \\\hline
        \multicolumn{1}{c|}{\multirow{2}{*}{\rotatebox{90}{Input~}}} & $0$ & $5/6$ & $1/6$ \\\cline{2-4}
        \multicolumn{1}{c|}{} & $1$ & $4/5$ & $1/5$ \\\hline
   \end{tabular}}
\end{equation}
if $i_B = 1$. Therefore, the model is not compatible with the causal scenario $A\to B$ and causal fraction can  be calculated as $\gamma=13/42$ using \eqref{eq:causalF}.  In fact, the model in Section \ref{subsubsec:BelltypeEx}, which we will denote as $e^{A\to B}$ does satisfy:
\begin{equation}
    \frac{13}{42} e^{A\to B}\preceq e
\end{equation}

This means that less than 31\% of this scenario can be explained as process where the choice of input of $A$ can influence the output of $B$.

\section{The experiment}

We started from a list of ambiguous nouns (homonymous and polysemous) and list of ambiguous verbs (homonymous and polysemous) and manually selected the verb-noun pairs for which several possible interpretations of both the verb-object and subject-verb phrases were possible. From these, we  randomly selected 50 phrases that had a homonymous verb and a homonymous noun, 50 phrases with a homonymous verb and a polysemous noun, 50 phrases with a polysemous verb and a homonymous noun, and finally 50 phrases with a polysemous verb and a polysemous noun. This resulted in a dataset of 200 ambiguous phrases with an equal number of  different types of ambiguous  (polysemous or homonymous) verbs and nouns. 

We launched this dataset on the Amazon Mechanical Turk (AMT) engine to collect human judgements. AMT workers were  tasked to rate the plausibility of the different interpretations of  the ambiguous phrases of the dataset. Each worker was provided with all  interpretations of each of the words of each phrase and only saw a subset of the dataset with 8 phrases in it. These sub-datasets are referred to as  \emph{HIT} by AMT. The phrases of each HIT only contained either subject-verb or verb-object combinations. We ranged the plausibility scores  over the discrete 0 to 7 interval and had 8 degrees of plausibility. In Psycholinguistics, a 7 grade scale has been deemed as most effective for human subjects, however, in order to avoid randomly chosen and accidental answers due to indecision, we also allowed for an 8th \emph{neutral} grade. We positioned this option in the middle of the scale and designed an 8 grade scale.   Each worker  was thus tasked to choose one of the 8 provided ``scale-description'' degrees of plausibility for each phrase. These were as follows:

\begin{center}
\begin{tabular}{cccc}
0: \text{impossible} & 1: \text{extremely unlikely} & 2: \text{very unlikely} & 3: \text{somewhat unlikely}\\
 4: \text{neutral}& 5: \text{somewhat likely} & 6: \text{very likely} & 7: \text{extremely likely}
\end{tabular}
\end{center}

The annotations  were  used to compute a probability distribution for all the possible interpretations of a phrase. This was done by averaging the scores of all the workers for a particular phrase, and then normalising the obtained average score. After this step, we combined the probability distributions to form ``Bell-type scenarios'', in which we then studied the causality and contextuality of the empirical models. 

Each phrase was  annotated by 25 workers and in  total we had 1250 annotators.  An annotator  could choose to annotate multiple HIT's, and spent on average 10 minutes per HIT. We paid the workers based on the minimum wage in the UK. The probability distributions obtained from the workers' plausibility scales were  used to form 322 (2,2,2) Bell-type scenarios corresponding to subject-verb phrases, and the same number of  verb-object phrases.  

An examples of a  subject-verb phrase was `pitcher threw'. The annotators were provided with all  possible meaning combinations of the phrase, which were as follows:
\begin{itemize}
\item combination 1: `pitcher' is a type of jug and `throw' is the literal action of sending something through the air, e.g. a ball. 
\item combination 2: `pitcher' is a type of jug and `throw' is the figurative action of sending something into a different state, e.g. a shadow. 
\item combination 3: `pitcher' is a baseball player and `throw' is the literal action of sending something through the air, e.g. a ball. 
\item combination 4: `pitcher' is a baseball player and `throw' is the figurative action of sending something into a different state, e.g. a shadow. 
\end{itemize}

A typical annotation for `pitcher threw' was as follows:

\begin{quote} combination 1: 0,   combination 2:  5,   combination 3: 6,  combination 4:  5
\end{quote}

\section{Causal relations in natural language models}
\subsection{Causality of SVO phrases}
We analyse the causality of subject-verb and verb-object phrases separately. Phrases were then combined in (2,2,2) Bell-type scenarios as described in the previous section. In these scenarios, an event  corresponds to choosing a   grammatical  type for a  word, i.,e. subject of a verb or object of a verb or a verb,  as input and then select  an interpretation of that word as output. An example of such an event is  choosing the word \emph{plant} as the subject of a verb (i.e. \emph{plant} will be the input) and then picking the ``factory'' interpretation of it as the output. An example of a subject-verb and object verb empirical model is shown in Fig. \ref{fig:exEM}; in turns, the full dataset can be found in \cite{dataset}.

\begin{figure}[ht]
    \centering
    \begin{subfigure}[c]{.45\linewidth}
        \centering
        \scalebox{0.8}{
        \begin{tabular}{c|c|c|c|c}
             & $(0,0)$ & $(0,1)$ & $(1,0)$ & $(1,1)$\\\hline
            \emph{\small the paper bored} & 0.21 &  0.13 & 0.51 & 0.15\\
            \emph{\small the paper launched} & 0.18 & 0.23 & 0.16 & 0.43\\
            \emph{\small the plant bored} & 0.17 & 0.30 & 0.16 & 0.37\\
            \emph{\small the plant launched} & 0.19 & 0.20 & 0.28 & 0.33
        \end{tabular}}
        \caption{A subject-verb model.}
    \end{subfigure}\quad%
    \begin{subfigure}[c]{.45\linewidth}
        \centering
        \scalebox{0.8}{
        \begin{tabular}{c|c|c|c|c}
             & $(0,0)$ & $(0,1)$ & $(1,0)$ & $(1,1)$\\\hline
            \emph{\small bored the paper} & 0.19 &  0.23 & 0.29 & 0.29\\
            \emph{\small bored the plant} & 0.18 & 0.21 & 0.32 & 0.29\\
            \emph{\small launched the paper} & 0.26 & 0.23 & 0.21 & 0.30\\
            \emph{\small launched the plant} & 0.29 & 0.18 & 0.23 & 0.30
        \end{tabular}}
        \caption{A verb-object model.}
    \end{subfigure}
    \caption{Examples of empirical models. Note: \emph{plant} and \emph{bore} have multiple meanings (respectively the living organism or the factory and making people loose interest or making holes) and \emph{paper} and \emph{launch} have multiple senses (respectively the material or content of an article and literally setting something in motion or starting an activity.)\label{fig:exEM}}
\end{figure}

What we have access to is the final distribution of possible interpretations of a phrase. What we are interested in is whether our empirical models are compatible with a definite causal order and indeed if so, which one. For  each subject-verb model (resp. verb-object model) we have  two events: $S$ and $V$ (resp. $V$ and $O$); these corresponds to choosing the subject and the verb (resp. verb and object) in a given phrase. There are  3 possible definite (acyclic) causal orders associated to each subject-verb phrase, namely $S\to V$, $V\to S$ and $S$ and $V$ being no-signalling (resp. $V\to O$, $O\to V$ and once again having $V$ and $O$ no-signalling in verb-object phrases). 
The obtained causal fractions for all subject-verb and verb-object models are shown in Figs. \ref{subfig:CfAvBSV} and \ref{subfig:CfAvBVO} respectively.

\begin{figure}[ht]
    \centering
    \begin{subfigure}[c]{.45\linewidth}
        \centering
        \includegraphics[width=.8\linewidth]{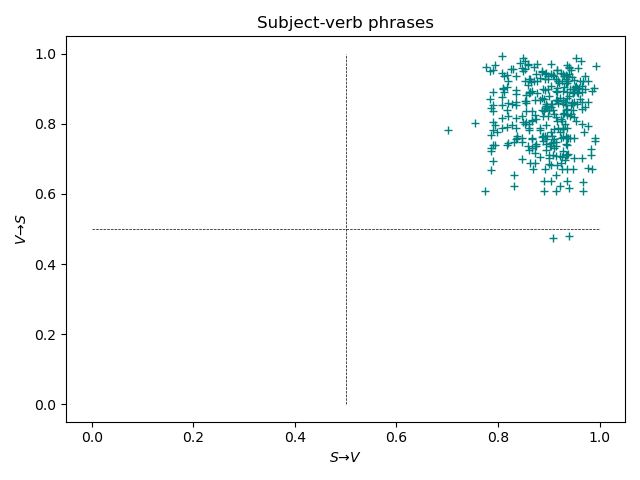}
        \caption{Causal fractions associated with definite causal orders for subject-verb models\label{subfig:CfAvBSV}}
    \end{subfigure}\quad%
    \begin{subfigure}[c]{.45\linewidth}
        \centering
        \includegraphics[width=.8\linewidth]{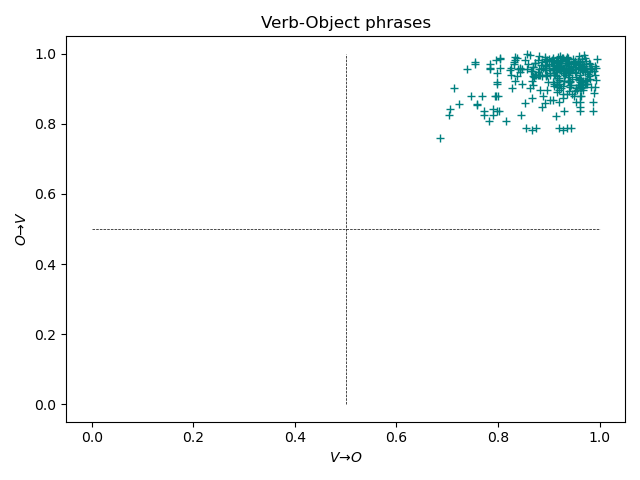}
        \caption{Causal fractions associated with definite causal orders for verb-object models \label{subfig:CfAvBVO}}
    \end{subfigure}
    \caption{}
\end{figure}

What emerged from the data is that subject-verb phrases are predominantly compatible with the $S\to V$ causal order.  All of the models had a causal fraction $>0.7$.  Both the $V\to S$ and the no-signalling fractions achieved  lower causal fraction values, see Fig.\ref{subfig:CfAvBvNSFinSV}. A similar result held for all of the verb-object models: these also  achieved a causal fraction  with the $O\to V$ order  higher than $70\%$, and their other causal fractions reached significantly lower scores, see Fig.\ref{subfig:CfAvBvNSFinVO}. What these results suggest is that the choice of nouns (subject or object) have more influence on which interpretation of the verb is selected, rather than the other way around, i.e. that the choice of verb has more influence on the interpretation of its subject or object. 

\begin{figure}[ht]
    \centering
    \begin{subfigure}[c]{.45\linewidth}
        \centering
        \includegraphics[width=.8\linewidth]{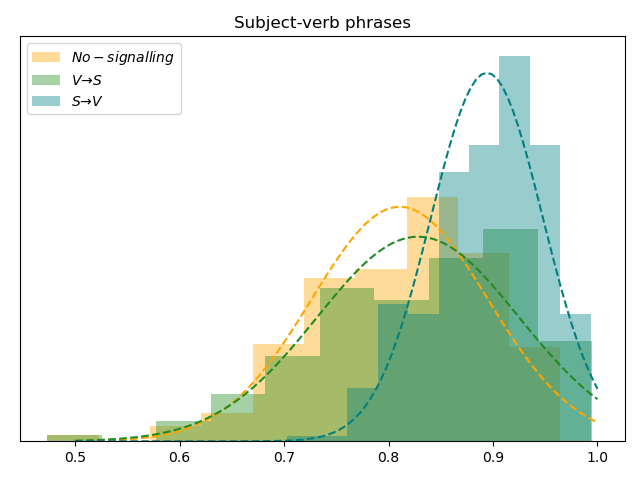}
        \caption{Causal fractions for subject-verb models\label{subfig:CfAvBvNSFinSV}}
    \end{subfigure}\quad%
    \begin{subfigure}[c]{.45\linewidth}
        \centering
        \includegraphics[width=.8\linewidth]{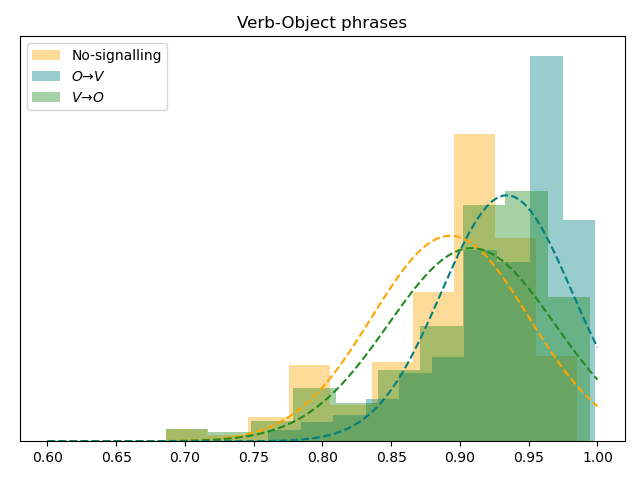}
        \caption{Causal fractions for verb-object models \label{subfig:CfAvBvNSFinVO}}
    \end{subfigure}
    \caption{}
\end{figure}



Finally the causal fractions (in either direction) were also found to be higher for verb-object phrases compared to subject-verb phrases. This would suggests that verb-object phrases are in general easier to disambiguate than subject-verb phrases (see Fig.\ref{fig:SVvVO}).
\begin{figure}[ht]
    \centering
    \includegraphics[width=.35\linewidth]{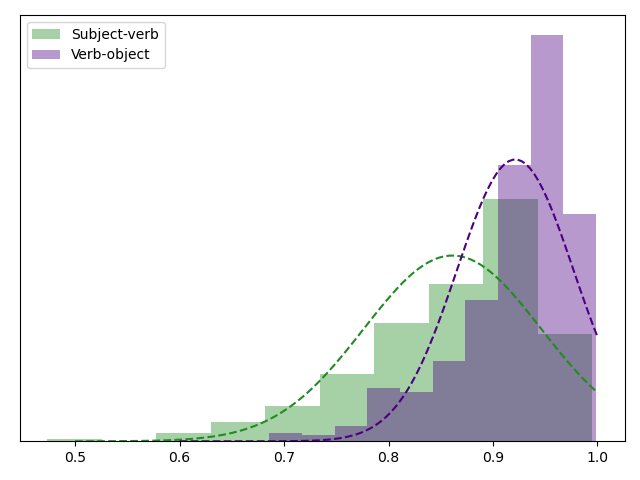}
    \caption{Causal fractions in subject-verb models and in verb-object models.\label{fig:SVvVO}}
\end{figure}

\subsection{Causality and levels of ambiguity}
After determining the causal order of the phrases of our dataset, we would like to  establish a relationship between the type of ambiguity of each word within a phrase, i.e. whether they are homonymous or polysemous,  and the causal fraction of the phrase. In other words, we would like to know whether the type of ambiguity has an effect on the causal fraction. 

After investigating, we found out that the types of ambiguities of the words in the model do not play a major role in the value of the different causal fractions. Indeed, no apparent correlation was observed in the verb-object models, where we computed a Spearman R-coefficient $\rho=0.009$, with a $p$-value$>87\%$.  There  was only a mild effect for the subject-verb models, here the more polysemous the words in a model, the higher the $S\to V$ causal fraction (Spearman R-coefficient $\rho=0.15$, $p$-value$<0.7\%$). These results are depicted   in Fig. \ref{subfig:SVglobalAmbiguity}.

\begin{figure}[ht]
    \centering
    \begin{subfigure}[c]{.45\linewidth}
        \centering
        \includegraphics[width=.8\linewidth]{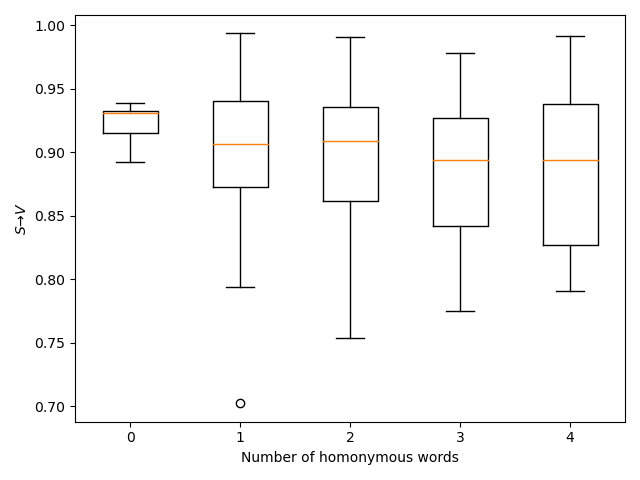}
        \caption{$S\to V$ causal fraction\label{subfig:SVglobalAmbiguity}}
    \end{subfigure}\quad %
    \begin{subfigure}[c]{.45\linewidth}
        \centering
        \includegraphics[width=.8\linewidth]{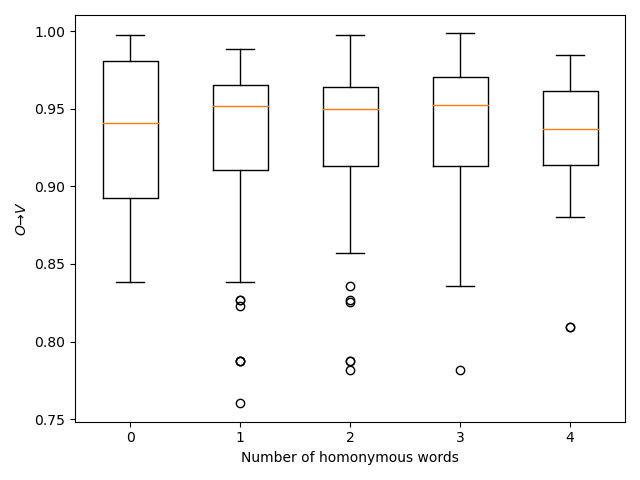}
        \caption{$O\to V$ causal fraction\label{subfig:VOglobalAmbiguity}}
    \end{subfigure}
    \caption{Causal fraction with respect to number of homonymous words in the model}
\end{figure}

The more significant difference was related to the type of ambiguities of the \emph{noun} and the \emph{verb} of the phrase. In both subject-verb and verb-object phrases, the $S\to V$ and $O\to V$ causal fractions were higher when the verb was polysemous (Spearman R-coefficient $\rho=0.17$, $p$-value $<0.2\%$ for subject-verb phrases, R-coefficient $\rho=0.16$, $p$-value $<0.4\%$ for verb-object phrases),  as depicted in see Fig.\ref{subfig:SVlocalAmbiguity} and Fig.\ref{subfig:VOlocalAmbiguityb} respectively. In addition, the causal fraction associated with $O\to V$ was higher whenever objects were homonymous  (Spearman R-coefficient $\rho=0.14$, $p$-value$<2\%$), as depicted in Fig.\ref{subfig:VOlocalAmbiguity},  no such effect was observed for the  subject-verb phrases (Spearman R-coefficient $\rho=0.04$, $p$-value$>50\%$). 

One may not that the Spearman coefficents found above are fairly low ($\rho < 0.2$), which does suggest that the correlations observed are quite mild. However, the $p$-values showed that the correlations claimed in the above paragraph are statistically significant, i.e. it is highly unlikely that no correlation exist between the causal fraction and levels of ambiguity.

\begin{figure}[h]
    \centering
    \begin{subfigure}[c]{.45\linewidth}
        \centering
        \includegraphics[width=.8\linewidth]{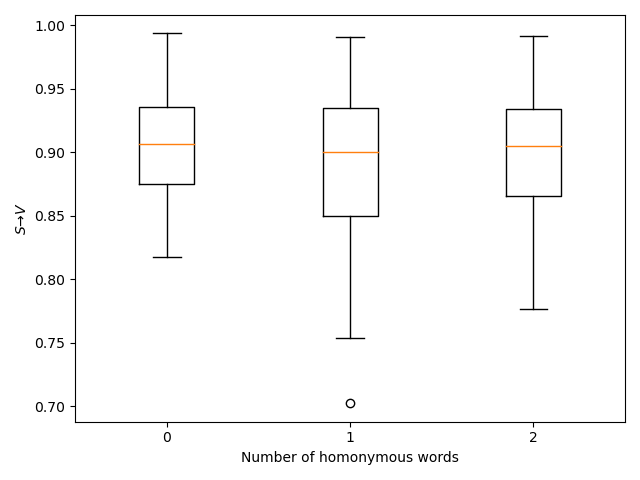}
        \caption{$S\to V$ causal fraction w.r.t. to the number of homonymous nouns.\label{subfig:SVlocalAmbiguityb}}
    \end{subfigure}\quad %
    \begin{subfigure}[c]{.45\linewidth}
        \centering
        \includegraphics[width=.8\linewidth]{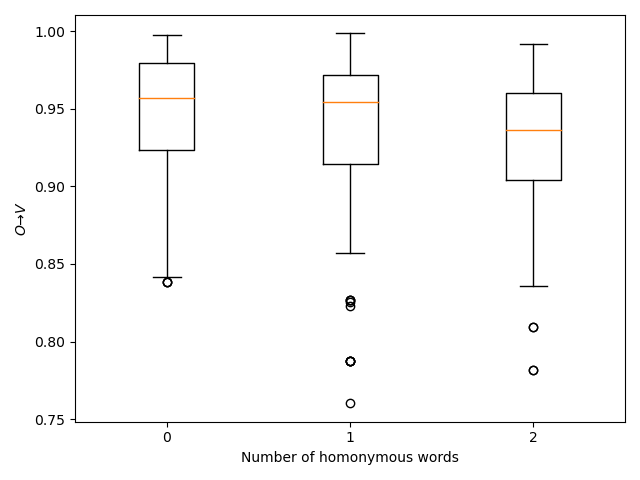}
        \caption{$V\to O$ causal fraction w.r.t. to the number of homonymous verbs.\label{subfig:VOlocalAmbiguityb}}
    \end{subfigure}
    \begin{subfigure}[c]{.45\linewidth}
        \centering
        \includegraphics[width=.8\linewidth]{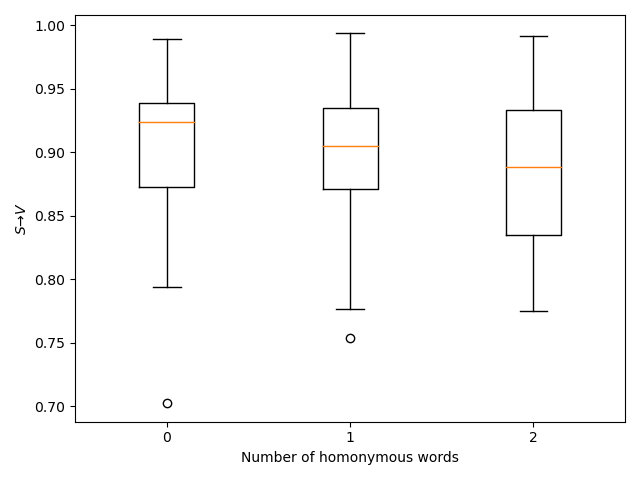}
        \caption{$S\to V$ causal fraction w.r.t. to the number of homonymous verbs.\label{subfig:SVlocalAmbiguity}}
    \end{subfigure}\quad %
    \begin{subfigure}[c]{.45\linewidth}
        \centering
        \includegraphics[width=.8\linewidth]{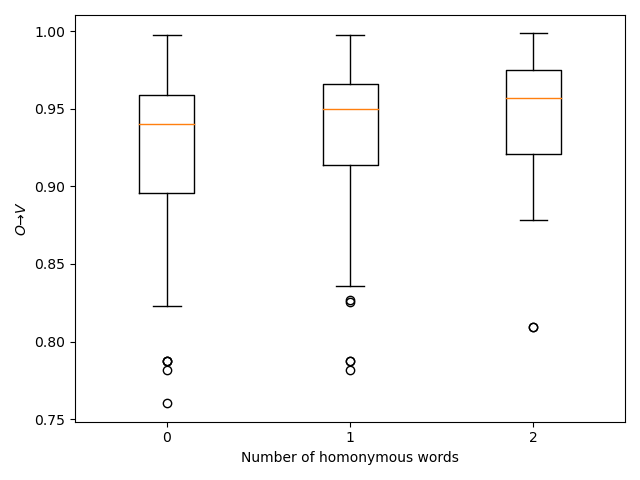}
        \caption{$V\to O$ causal fraction w.r.t. to the number of homonymous nouns.\label{subfig:VOlocalAmbiguity}}
    \end{subfigure}
    \caption{}
\end{figure}

\section{Conclusion}
We believe that our data reflects the findings of the Psycholinguistics and eye-tracking experiments. Indeed, the study of Pickering and Frisson in \cite{PickeringFrisson} observed a delay in disambiguation of ambiguous (transitive) verbs in comparison to the processing of ambiguous nouns. Hence, it would make sense that in a subject-verb or verb-object phrase, where each word is ambiguous, the verb would be disambiguated last, thus explaining the dominance of the $S\to V$ and $O\to V$ causal fractions.

It has also been shown in \cite{PickeringFrisson} that polysemous verbs are disambiguated even later than homonymous verbs. This is consistent with our finding in the subject-verb phrases, where models with several polysemous verbs tend to have a higher causal fraction, as the verb would be disambiguated even later than homonymous verbs would. Similarly, several similar studies have shown that homonymous nouns are (partially) disambiguated instantly \cite{FRAZIER1990181}, whilst polysemous nouns generally require a larger context in order to be (even partially) disambiguated\cite{FRAZIER1990181,FrissonPickering2001b,underspecification}. This fits with our data for the verb-object models, where it was observed that $O\to V$ causal fractions are higher whenever the object had multiple meanings (which is then disambiguated even faster than when the object has multiple senses).

What remains to find out is why the effect of the ambiguity of nouns was different in subject-verb and verb-object models. One  way to interpret this   would be  by taking into account  the difference between  pre and post contexts of the phrases. It was shown in \cite{FRAZIER1990181} that homonymous nouns  were disambiguated a lot faster than polysemous nouns, when the disambiguation context occurs before the target words. This would nicely explain the difference between verb-object and subject-verb phrases. The only possible disambiguation context for nouns in verb-object phrases is the verb, and therefore in such cases, it should be even clearer that the verb would be disambiguated after its object when the latter has multiple meanings. In the case of  disambiguation context for nouns, it was shown that reading times were longer for both homonymous and polysemous nouns. This explains why we did not detect any particular effect in the subject-verb phrases. 


\subsection{Future Work}

The framework developed in \cite{sheafcausality} allows us to distinguish quantum-like (i.e. contextual) and classical processes assuming that a given model is compatible with a definite causal structure. Due to the approximate nature of the probability distribution obtained, i.e. that all the probabilities calculated are not exact, the causal fractions are not exactly $1$, and there are infinitely many sub-distributions that would be compatible with a given causal order. In such cases,  it is not easy to determine whether our empirical model could be contextual or not. Investigating how to extend the model to include these is an interesting future direction.  Also, we would like to collect probabilities using state of the art neural embeddings such as BERT and extending the current study to ambiguities arising from syntax and discourse. Finally, computing casual fragments is much cheaper and quicker than collecting data in labs with eye-tracking equipment and exploring this impact case is our future ambition.  
We would also like to relate our work with related work such as the density matrices model of lexical ambiguity\cite{robin}. Similarly, we could also study other datasets from psychology and behavioural science which showed signalling but no ``true contextuality''\cite{cbdBehaviour}, such as the concept combination dataset of Bruza et. al.\cite{BRUZA201526}.


\bibliographystyle{eptcs}
\bibliography{refs}
\end{document}